\theoremstyle{plain}
\newtheorem{theorem}{Theorem}[section]
\newtheorem{corollary}[theorem]{Corollary}
\theoremstyle{definition}
\newtheorem{definition}[theorem]{Definition}
\theoremstyle{remark}
\newcommand{\anvith}[1]{\fTBD{\color{brown}\textbf{AT}: #1}}
\newcommand{\nicolas}[1]{\fTBD{\color{violet}\textbf{NP}: #1}}
\newcommand{\privacyscore}{privacy loss\xspace}
\newcommand{\privacyscores}{privacy losses\xspace}
\newcommand{\Privacyscore}{Privacy loss\xspace}
\newcommand{\Privacyscores}{Privacy losses\xspace}
\newcommand{\PrivacyScores}{Privacy Losses\xspace}
\icmltitlerunning{Leveraging Per-Instance Privacy for Machine Unlearning}
\begin{document}

\twocolumn[
\icmltitle{Leveraging Per-Instance Privacy for Machine Unlearning}



\icmlsetsymbol{equal}{*}

\begin{icmlauthorlist}
\icmlauthor{Nazanin Mohammadi Sepahvand}{equal,ECEMcgill,mila}
\icmlauthor{Anvith Thudi}{equal,CSTor,vec}
\icmlauthor{Berivan Isik}{comp_mtv}
\icmlauthor{Ashmita Bhattacharyya}{CSTor,vec}
\icmlauthor{Nicolas Papernot}{CSTor,vec}
\icmlauthor{Eleni Triantafillou}{comp_london}
\icmlauthor{Daniel M. Roy}{vec,StatsTor}
\icmlauthor{Gintare Karolina Dziugaite}{CSMcgill,mila,comp_toronto}
\end{icmlauthorlist}

\icmlaffiliation{ECEMcgill}{Department of Electrical and Computer Engineering, McGill University, Montreal, Canada}
\icmlaffiliation{CSMcgill}{Department of Computer Science, McGill University, Montreal, Canada}
\icmlaffiliation{mila}{Mila, Montreal, Canada}
\icmlaffiliation{CSTor}{Department of Computer Science, University of Toronto, Toronto, Canada}
\icmlaffiliation{StatsTor}{Department of Statistical Science, University of Toronto, Toronto, Canada}
\icmlaffiliation{comp_mtv}{Google, Mountain View, US}
\icmlaffiliation{comp_london}{Google DeepMind, London, UK}
\icmlaffiliation{comp_toronto}{Google DeepMind, Toronto, Canada}
\icmlaffiliation{vec}{Vector Institute, Toronto, Canada}

\icmlcorrespondingauthor{Gintare Karolina Dziugaite}{gkdz@google.com}

\icmlkeywords{Machine Learning, ICML}

\vskip 0.3in
]



\printAffiliationsAndNotice{\icmlEqualContribution} 

\begin{abstract}
We present a principled, per-instance approach to quantifying the difficulty of unlearning via fine-tuning. We begin by sharpening an analysis of noisy gradient descent for unlearning  \citep{chien2024langevin}, obtaining a better utility--unlearning tradeoff by replacing worst-case privacy loss bounds with per-instance privacy losses \citep{thudi2024gradients}, each of which bounds the (R\'enyi) divergence to retraining without an individual data point. To demonstrate the practical applicability of our theory, we present empirical results showing that our theoretical predictions are born out both for Stochastic Gradient Langevin Dynamics (SGLD) as well as for standard fine-tuning without explicit noise. We further demonstrate that per-instance privacy losses correlate well with several existing data difficulty metrics, while also identifying harder groups of data points, and introduce novel evaluation methods based on loss barriers. All together, our findings provide a foundation for more efficient and adaptive unlearning strategies tailored to the unique properties of individual data points.
\end{abstract}

\section{Introduction} \label{sec:introduction}

Machine unlearning aims to efficiently remove the influence of specific subsets of training data, called \emph{forget sets}.
The need for unlearning arises in many scenarios, such as handling requests for data deletion~\citep{mantelero2013eu,cooper2024machine}, mitigating the impact of poisoning attacks~\citep{liu2024threats}, or updating out-of-date information.

In modern large-scale AI training regimes, exact unlearning (i.e., anything equivalent to retraining the model from scratch, without the forget set) is prohibitively expensive. To meet this challenge, a range of approximate unlearning methods have been developed. 

Approaches based on ideas from differential privacy (DP) offer meaningful worst-case guarantees~\citep{guo2020certified, sekhari2021}. These will be our focus.
Unfortunately, for non-convex models, like neural networks, DP-based unlearning guarantees have so far come with higher error than their counterparts not designed to support unlearning, limiting their practical applicability. 
One of the key problems is the mismatch between the worst-case, data-agnostic nature of DP and the data-dependent nature of unlearning. Our work attempts to bridge this gap.

Alongside work on theoretical guarantees, research into heuristics has flourished. The aesthetics of such research is to combine strong utility with empirical evaluations on metrics inspired by DP-based notions of approximate unlearning. It is not uncommon, however, for state of the art heuristics to be caught out by new attacks, revealing that they do not meet stringent theoretical criteria \citep{hayes2024inexact,pawelczyk2024machine}. 
These challenges highlight the need for methods that can balance strong theoretical guarantees with strong practical performance.


Towards understanding failure modes of unlearning, recent empirical work has shown that the behavior of unlearning varies considerably across individual data points \citep{zhao2024makes,baluta2024unlearning}. While prior work has attempted to incorporate these insights into methodology and evaluation, 
we lack a theoretical basis to 1) explain how individual data influence unlearning and 2) exploit this effect optimally. 


In this work, we introduce a principled approach to estimating the unlearning difficulty of individual data points.
Our approach applies to learning with noisy gradient descent.
Using recent results in differential privacy with R\'enyi divergences \citep{thudi2024gradients}, 
our proposed measures---\emph{per-instance \privacyscores}---bound  the R\'enyi divergence between a model trained with and without an individual data point. 
Critically, per-instance \privacyscores can be efficiently estimated during training, based on the norms of gradient associated to individual data points.

Armed with per-instance \privacyscores, we 
revisit \citeauthor{chien2024langevin}'s \citeyear{chien2024langevin} theoretical analysis of noisy gradient descent as an unlearning scheme (coined ``Langevin unlearning'' a.k.a ``noisy fine-tuning''), based on training without the forget set. 
Our analysis provides a theoretical foundation for understanding how the number of iterations needed to approximately unlearn scales with per-instance \privacyscores.
Empirical validation demonstrates that our estimates of per-instance \privacyscores serve as reliable predictors of unlearning difficulty under noisy fine-tuning.

While noisy fine-tuning is not widely used, standard (noiseless) fine-tuning is a surprisingly effective and simple approximate unlearning approach, and serves as a building block for SOTA methods, such as L1-Sparse fine-tuning \citep{hayes2024inexact,liu2024model}. 
We demonstrate the applicability of our theoretical insights, showing they extend empirically to standard fine-tuning-based unlearning, even in the absence of explicit noise injection.
Empirically, we find that per-instance \privacyscores continue to predict unlearning difficulty for fine-tuning across multiple approximate unlearning metrics, datasets, and architectures. 
Beyond fine-tuning, results with the L1-Sparse method \citep{liu2024model, hayes2024inexact} show privacy losses once again predict the number of steps to unlearn.

\Privacyscores may also offer a more refined measure of data difficulty for unlearning.
We show that \privacyscores are strongly correlated with some (cheaper-to-compute) proxy measures of data difficulty studied in prior work. On the other hand, we show that \privacyscores identify more ``difficult'' data, which may be useful for evaluation more broadly.

Besides standard empirical unlearning metrics, we evaluate \privacyscores on a novel loss-landscape-based metric, based on linear connectivity and loss barriers \citep{frankle2020linear,fort2020deep}. While past unlearning metrics look at the output (e.g., loss) of an individual model, loss barriers capture some of the geometry of the loss landscape. Under this stringent test, fine-tuning is able to successfully unlearn. Interestingly, loss barriers also give insight into the differences between difficult and easy data points: points with higher \privacyscores start off with larger loss barriers which need to be overcome to unlearn. 




Our contributions can be summarized as follows: 

\begin{itemize}
    \item   \textbf{Per-Instance Theoretical Analysis of Unlearning. }  We revise \citeauthor{chien2024langevin}'s analysis of noisy gradient descent for unlearning-via-fine-tuning, 
    replacing a worst-case R\'enyi-DP bound
    with recent \emph{per-instance privacy losses} \citep{thudi2024gradients}.
    By exploiting that typical data points may have far less influence on the learned model, our per-instance analysis uncovers an improved trade-off between unlearning and utility compared to prior worst-case analyses.

    \item \textbf{Empirical Validation of Unlearning Difficulty. } We demonstrate the practical applicability of our theoretical insights,
    showing that per-instance privacy losses reliably predict unlearning difficulty in experiments, both for noisy gradient descent and 
    standard fine-tuning. 

    \item \textbf{Efficient Proxies for Privacy Losses. } We demonstrate that cheap proxy measures of data difficulty
    are strongly correlated with per-instance \privacyscores. 
    This identifies practical and scalable alternatives for assessing unlearning quality and forget set difficulty,
    particularly in resource-constrained settings where full \privacyscore computation might be prohibitive.

    \item \textbf{Improved Identification of Difficult Data. } 
    Unlike existing heuristics for capturing aspects of unlearning difficulty, we show that \privacyscores identify harder groups of data points, offering a versatile and theoretically justified measure of unlearning difficulty. 

    \item \textbf{Loss Barrier Analysis of Unlearning. } We introduce loss barriers (modulo permutation) as a way to
    evaluate unlearning.
    We find that the loss barrier is significantly reduced after unlearning, reaching levels comparable to baseline levels. 

    \item \textbf{Broader Implications for Unlearning Methodology. } Our empirical findings suggest that per-instance \privacyscores may be useful in adapting other fine-tuning-based algorithms, such as L1-sparse.
\end{itemize}

\section{Related Work} \label{sec:related_work}

\paragraph{Unlearning} 

Unlearning \citep{cao2015towards} aims to remove the influence of training data.  In the context of neural networks and non-convex learning problems, even highly optimized exact approaches \citep{bourtoule2021machine} are computationally intensive.
Going beyond exact unlearning, \citet{ginart2019making} introduced a notion of \emph{approximate} unlearning, inspired by differential privacy \citep{dwork2014algorithmic}. This approach relies on approximate statistical indistinguishability between retraining and unlearning, in exchange for better efficiency or utility compared to exact unlearning. 
\citet{guo2020certified,neel2021descent} studied approximate unlearning algorithms for convex models based on gradient descent using all the training data except those data to be forgotten (a.k.a., \emph{fine-tuning}), followed by the addition of noise (often via the Gaussian mechanism, \citealt{dwork2006calibrating}) to obtain statistical indistinguishability.
\citet{neel2021descent} proved that, under various assumptions, their approach achieves approximate unlearning in a sequential framework with fixed per-deletion runtime.

For non-convex models, a plethora of approximate unlearning methods have been proposed with empirical validation, rather than theoretical guarantees  \citep{graves2021amnesiac,goel2022towards,thudi2022unrolling,kurmanji2024towards,liu2024model,fan2023salun}. While these approaches are shown to work on some metrics, recent work shows that several unlearning methods fail against more sophisticated attacks for either privacy or poisoning \citep{hayes2024inexact,pawelczyk2024machine}.

Given the heuristic nature of many unlearning methods, a line of work has attempted to obtain a better understanding of their failure modes. \citet{zhao2024makes,baluta2024unlearning,fan2025challenging,barbulescu2024each} identified properties of forget sets that influence the behaviors of approximate unlearning algorithms.  \citet{zhao2024makes,barbulescu2024each} also derived improved unlearning methods based on these insights, for vision classifiers and LLMs, respectively. However, we lack a theoretical understanding of the underlying relationship between the identified factors and unlearning difficulty. 

Recently, \citet{chien2024langevin} studied Langevin dynamics for unlearning and introduced an approximate method with privacy guarantees for non-convex models that we build upon in this work, replacing a worst-case bound by a recent per-instance privacy bound. Their method leverages noisy gradient descent, for both the training algorithm, as well as during (noisy) fine-tuning at unlearning time. While our theoretical analysis considers that same setup, we empirically also investigate standard training without noise addition. 


\vspace{-1em}
\paragraph{Per-Instance Differential Privacy} Differential privacy (DP) is a standard approach to privacy-preserving data analysis~\citep{dwork2006calibrating,dwork2014algorithmic} and machine learning~\citep{chaudhuri2011differentially, abadi2016deep}. DP methodology provides an upper bound on the divergence between output distributions under neighbouring datasets, i.e., when only one individual data point is altered. Since this upper bound needs to hold for every dataset and all of its neighbors, DP is a \emph{worst-case} privacy notion with a single privacy parameter, shared among \emph{all} individual data points. However, in practice, different individual points have different effects when training a machine learning model~\citep{thudi2024gradients, yu2023individual}. For instance, some data points have a much smaller gradient norm than others, making the worst-case privacy analysis for these points over-conservative, and leads to unnecessary degradations in privacy-utility tradeoffs. To mitigate such degradations, \citet{ghosh2011selling} and \citet{cummings2020individual} have analyzed individual sensitivity rather than worst-case sensitivity, which led to an alternative but weaker notion of privacy, called per-instance (or individual, or personalized) DP. Per-instance DP assigns a different privacy loss for different data points in a dataset, i.e., it is not worst-case, and it is less pessimistic than standard DP for points that do not affect the output distribution as much as the worst-case points. Building on per-instance DP, \citet{ebadi2015differential} and \citet{feldman2021individual} have introduced algorithms that filter out data points once their per-instance DP loss exceeds a budget in database query problems and neural network training, respectively. For iterative algorithms like SGD, per-instance DP requires new composition theorems as the privacy loss is adaptive to the individual data points and is different at each iteration~\citep{wang2019per}. 
\citet{feldman2021individual, thudi2024gradients} have filled this gap by providing new privacy composition analyses using the associated divergences computed at an individual level.

Motivated by the empirical observation that per-instance privacy loss provides much more promising guarantees than the worst-case DP analysis, we propose to port this per-instance approach to unlearning.
We will see that this approach will allow us to provide unlearning guarantees and uncover properties of data that makes a forget set easy to unlearn.


\section{Preliminaries and Problem Setup}

\newcommand{\data}{\mathcal{D}}
\newcommand{\learningalg}{\mathcal{A}}
\newcommand{\unlearnalg}{\mathcal{U}}
\newcommand{\unlearningdist}{\rho^k_{\data'}(\nu_{T,\data})}

Given a dataset $\data = \{x_i\}_{i=1}^n$ of $n$ points, 
we are interested in estimating the difficulty of unlearning as a function of the particular 
\emph{forget set} $\data_F \subset \data$ we are aiming to unlearn. 

We are interested in settings where retraining the model from scratch on the \emph{retain set} $\data'  = \data \setminus \data_F$ is too costly,
and so we consider approximate unlearning. In this work, we adopt a notion of approximate unlearning based on R\'enyi divergences.
For $\alpha > 0$ and $\alpha \neq 1$, recall that the $\alpha$-R\'enyi divergence of $\mu$ relative to $\nu \gg \mu$,  is defined to be 
\begin{align}
    D_{\alpha} (\mu \| \nu) = \frac{1}{\alpha - 1} \log \left ( \mathbb{E}_{x \sim \nu} \left [ \frac{\mu(x)}{\nu(x)} \right ]^{\alpha} \right ).
\end{align}
(Here, one can interpret $\frac{\mu(x)}{\nu(x)}$ as a Radon--Nikodym derivative more generally.)

For a \emph{fixed} training set and forget set, the following definition 
captures the goal of unlearning in R\'enyi divergence:

\begin{definition}
\label{defn:perinstancerenyiunlearning}
Fix a dataset $\data$ 
and forget set $\data_F \subset \data$.
Let $\nu$ be the distribution of the output of $\learningalg(\data')$, i.e., the learning algorithm on the retain set $\data'=\data\setminus\data_F$,
and let $\mu$ be the distribution of the output of $\unlearnalg(\learningalg(\data),\data')$, i.e., the unlearning algorithm on the learned model $\learningalg(\data)$ and $\data'$.
For $\alpha>1$, we say $\unlearnalg$ 
\emph{$(\alpha, \varepsilon)$-R\'enyi unlearns $\data_F$ from $\learningalg(\data)$} when
$D_{\alpha}(\mu \| \nu) \leq \varepsilon$.
\end{definition} 

This leads to a \emph{uniform} notion of R\'enyi unlearning, which offers the same guarantee for all datasets and all forget sets,
which is analogous to standard notions.
%
\begin{definition}[R\'enyi Unlearning]
\label{defn:renyiunlearning}
For $\alpha>1$, an unlearning algorithm $\unlearnalg$ is an $(\alpha, \varepsilon)$-R\'enyi unlearner (for $\learningalg$)
when, for all dataset $\data$ and forget sets $\data_F \subseteq \data$,
$\unlearnalg$ $(\alpha, \varepsilon)$-R\'enyi unlearns $\data_F$ from $\learningalg(\data)$ in the sense of \cref{defn:perinstancerenyiunlearning}.
\end{definition} 
Note that when this definition holds, one can immediately derive (standard) approximate DP-based unlearning guarantees
\citep{ginart2019making,guo2020certified,neel2021descent,sekhari2021},
using standard reductions \citep[Proposition 10,][]{mironov2017renyi}.\footnote{A symmetrized variation of this definition was proposed by \citet{chien2024langevin}. Our asymmetric version is more consistent with the unlearning literature, where we care about making events under unlearning distribution not much more likely than under the retraining distribution.}
%

\subsection{Learning--Unlearning with Noisy Gradient Descent}

We consider a learning--unlearning setup based on using projected noisy gradient descent during both learning and unlearning, as studied recently by \citet{chien2024langevin}. 
In this section, we present theoretical results that rely on noise.
In \cref{sec:experiments}, we present empirical evidence from multiple benchmarks that the overall trends extend to standard variants of gradient descent.

Formally, our learning algorithm $\learningalg=\learningalg_T$ is $T$ steps of noisy gradient descent on $\data$, starting from a random initialization.
Let $\nu_{T,D}$ denote the distribution on weights obtained after $T$ steps of training on $\data$, and let $\nu_D = \nu_{\infty,D}$ denote the stationary distribution, to which the output distribution converges as $T \to \infty$. 

Let $\data' = \data \setminus \data_F$ denote the retain set of an unlearning request.  
Assuming we trained for $T$ iterations, 
exact unlearning would produce weights whose (marginal) distribution was $\nu_{T,\data'}$, i.e., the distribution as if we had trained on $\data'$ from scratch for $T$ iterations. 

Our unlearning algorithm $\unlearnalg=\unlearnalg_k$ runs $k$ steps of noisy gradient descent on $\data'$.
We denote by $\unlearningdist$ the output distribution of unlearning, i.e., of running $k$ steps of projected noisy gradient descent on $\data'$, initialized at a sample from $\nu_{T,\data}$, i.e., first training on $\data$.


\section{Per-Instance Unlearning Difficulty Analysis} \label{sec:method_analysis}

Our goal is to bound the number of steps, $k$, of unlearning by noisy fine-tuning needed to achieve unlearning in a way that adapts to the influence each data point has on the learned model.
We do so by building on two pieces of work: a convergence analysis of Langevin dynamics by \citet{chien2024langevin}
and a per-instance R\'enyi-differential privacy analysis by \citet{thudi2024gradients}.

\subsection{Convergence Analysis}

Fix a dataset $\data$ and retain set $\data'\subset \data$.
We begin with the following corollary of
\citep[Thm.~3.2,][]{chien2024langevin}, which highlights the role of an initial bound on $D_{\alpha}(\nu_{T,\data}\|\nu_{T,\data'})$.
In the following, we assume the loss is Lipschitz continuous and smooth, and that the step sizes used by $\learningalg$ and $\unlearnalg$ have been set according to \citep[Thm.~3.2,][]{chien2024langevin}.
\begin{corollary}
\label{thm:unl_dyn}
     Fix $\data$, $\data'$.  Let $\{\varepsilon_\alpha,\varepsilon'_\alpha\}_{\alpha \ge 1}$ 
     satisfy  
     $\max\{D_{\alpha}(\nu_{\data'}\|\nu_{T,\data'}),D_{\alpha}(\nu_{T,\data'}\|\nu_{\data'})\}\leq \varepsilon_\alpha$ 
     and 
     $D_{\alpha}(\nu_{T,\data} \| \nu_{T,\data'}) \leq \varepsilon'_\alpha$ 
     for all $\alpha$. Then,
     there exists a constant $C > 0$ such that, for all $\alpha > 1$, $\unlearnalg_k$  $(\alpha,\epsilon^*)$-R\'enyi unlearns $\data\setminus\data'$ from $\learningalg(\data)$ in $k$ steps, where $\epsilon^*$ is 
      \[
 \Big( \frac{2\alpha - 1/2}{2\alpha-2} \varepsilon'_{4\alpha} + \frac{2\alpha - 1}{2\alpha-2} \varepsilon_{4\alpha-1} \Big)
 \exp\Big(-\frac{Ck}{2\alpha} \Big)
 + \varepsilon_{2\alpha-1}.
\]
\end{corollary}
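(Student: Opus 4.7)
The plan is to reduce the claim to two applications of the weak triangle inequality for Rényi divergence, composed with an exponential contraction of the noisy projected gradient descent kernel on $\data'$ established in \citet[Thm.~3.2]{chien2024langevin}. The guiding idea is to insert the stationary distribution $\nu_{\data'}$ as a pivot (so that contraction of the unlearning kernel can be applied) and then insert $\nu_{T,\data'}$ as a further pivot, so the bounds express themselves in terms of the hypothesized $\varepsilon_\alpha$ and $\varepsilon'_\alpha$.

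First I would apply the weak triangle inequality (e.g., \citealp[Prop.~11]{mironov2017renyi}) to the divergence of interest with pivot $\nu_{\data'}$, giving
\[
D_\alpha(\mu \,\|\, \nu_{T,\data'}) \leq \tfrac{2\alpha-1}{2\alpha-2}\, D_{2\alpha}(\mu \,\|\, \nu_{\data'}) + D_{2\alpha-1}(\nu_{\data'} \,\|\, \nu_{T,\data'}).
\]
The second term is bounded by $\varepsilon_{2\alpha-1}$ by hypothesis and accounts for the additive residual in $\epsilon^*$. For the first term, I would invoke the $k$-step contraction of the noisy projected gradient descent kernel on $\data'$, whose stationary distribution is $\nu_{\data'}$. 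Under the Lipschitz/smoothness and step-size assumptions, the analysis of \citet{chien2024langevin} yields a contraction of Rényi divergences of order $\beta$ at rate $\exp(-Ck/\beta)$, for a constant $C>0$ determined by a log-Sobolev-type constant of the stationary distribution. Since $\mu$ is the output of running $k$ steps of this kernel starting from a sample of $\nu_{T,\data}$, taking $\beta = 2\alpha$ yields $D_{2\alpha}(\mu \,\|\, \nu_{\data'}) \leq \exp(-Ck/(2\alpha))\, D_{2\alpha}(\nu_{T,\data} \,\|\, \nu_{\data'})$.

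To reach the stated bound, I would apply the weak triangle inequality once more, now at order $2\alpha$, inserting $\nu_{T,\data'}$ as pivot:
\[
D_{2\alpha}(\nu_{T,\data} \,\|\, \nu_{\data'}) \leq \tfrac{2\alpha-1/2}{2\alpha-1}\, D_{4\alpha}(\nu_{T,\data} \,\|\, \nu_{T,\data'}) + D_{4\alpha-1}(\nu_{T,\data'} \,\|\, \nu_{\data'}),
\]
then bound the two terms by $\varepsilon'_{4\alpha}$ and $\varepsilon_{4\alpha-1}$ respectively. Multiplying through by the $\tfrac{2\alpha-1}{2\alpha-2}$ prefactor from the first application and the contraction factor $\exp(-Ck/(2\alpha))$ produces exactly $\bigl(\tfrac{2\alpha-1/2}{2\alpha-2}\varepsilon'_{4\alpha} + \tfrac{2\alpha-1}{2\alpha-2}\varepsilon_{4\alpha-1}\bigr)\exp(-Ck/(2\alpha)) + \varepsilon_{2\alpha-1}$, matching the statement once one uses the identity $\tfrac{\alpha-1/2}{\alpha-1} = \tfrac{2\alpha-1}{2\alpha-2}$.

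The main obstacle is bookkeeping rather than conceptual: I must ensure successive applications of the weak triangle inequality produce the Rényi orders $2\alpha-1$, $4\alpha$, $4\alpha-1$ stipulated in the statement, and that the coefficients $\tfrac{2\alpha-1/2}{2\alpha-1}$ and $\tfrac{2\alpha-1}{2\alpha-2}$ collapse into $\tfrac{2\alpha-1/2}{2\alpha-2}$ and $\tfrac{2\alpha-1}{2\alpha-2}$ under multiplication. The contraction constant $C$ and the $1/(2\alpha)$ scaling in the exponent are inherited directly from \citet{chien2024langevin} and require no new analysis; the only auxiliary check is that the hypotheses of their Theorem~3.2 are satisfied by the unlearning kernel on $\data'$ when initialized at a sample from $\nu_{T,\data}$, which follows from the assumed Lipschitz/smoothness structure and the prescribed choice of step sizes.
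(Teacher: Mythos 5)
Your proposal is correct and follows essentially the same route as the paper's proof: the same two applications of the weak triangle inequality with pivots $\nu_{\data'}$ and $\nu_{T,\data'}$, the same contraction step from \citet[Thm.~3.2]{chien2024langevin} at order $2\alpha$, and the same coefficient bookkeeping (merely presented in the reverse order, outer decomposition first). No gaps.
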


The proof is in \cref{app:proof_unl_dyn}, and follows from \citep[Thm.~3.2,][]{chien2024langevin} and the weak triangle inequality for R\'enyi divergences.

The key observation is that the first term decays exponentially fast in the number of steps $k$, with the initial value determined by the per-instance guarantee and distance to stationarity and the rate determined by the desired value of $\alpha$ and problem parameters (like the Lipschitz and smoothness constants, behind $C$). 
The second term, $\varepsilon_{2\alpha-1}$, however, does not vanish. 

The irreducible term captures the distance to stationarity after $T$ steps of training, measured in a higher order ($2\alpha -1$)-divergence. This term exists as a consequence of our unlearning algorithm not attempting to correct for the $k$-steps of extra training. 



\subsection{Unlearning Individual Data Points}

The following definition is adapted from a differential privacy result by \citet{thudi2024gradients} to our unlearning setting:



\begin{definition}[Per-Instance Privacy Loss]
\label{def:priv_loss}
     Recall that $\nu_{T,D}$ is the distribution of the model weights after $T$ iterations of noisy gradient descent on $\data$.
     Let $g(x^*,w) = \nabla_w \ell(w,x^*)$ be the contribution to the weight-gradient at $w$, coming from one data point $x^*$.  The \emph{per-instance \privacyscore} for $x^*$ is:
    $$\textstyle P(x,\alpha) \coloneqq \sum_{t = 1}^{T} C_{t,\alpha} \ln \mathbb{E}_{w \sim \nu_{t,D}} f_{t,\alpha}(\| g(x^*,w) \|_2),$$
    where $C_{t,\alpha} = \frac{1}{\alpha -1} \frac{(p-1)^{t+1}}{p^{t+1}}$ and $\ln f_{t,\alpha}(g)$ is
    \begin{multline*}
      p \ln \bigg(\sum_{k=0}^{o_p^{i}(\alpha)} \binom{o_p^{t}(\alpha)}{k} \mathbb{P}_{x^*}(1)^k \,\overline{\mathbb{P}_{x^*}(1)}\vphantom{\mathbb{P}_{x^*}(1)}^{o_p^{t}(\alpha)-k} e^{\frac{g^2(k^2 - k)}{2\sigma^2}}\bigg),
    \end{multline*}
with $o_p(\alpha) = \frac{p}{p-1} \alpha - \frac{1}{p}$ and $o_p^{t}$ is $o_p$ composed $t$ times, $\mathbb{P}_{x^*}(1)= 1 - \overline{\mathbb{P}_{x^*}(1)}$ is the sampling probability of the data point (batch size over dataset size), and $p$ is a free parameter we set to $3T$, following \citep[Fact 3.4,][]{thudi2024gradients}.
\end{definition}

This quantity, which we show how to estimate  efficiently in \cref{subsec:empval},
yields a bound on a data point's sensitivity.

\begin{theorem}
\label{thm:per_inst}
Fix $\data$, let $x \in \data$, and put $\data' = \data \setminus \{x\}$.
Then $D_{\alpha}(\nu_{T,\data}\| \nu_{T,\data'}) \leq P(x,\alpha)$.
\end{theorem}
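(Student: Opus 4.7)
The plan is to directly adapt the per-instance R\'enyi differential privacy bound of \citet{thudi2024gradients} (their Fact 3.4) to the learning-unlearning setting, where the two neighboring distributions $\nu_{T,\data}$ and $\nu_{T,\data'}$ differ in exactly the single point $x$ (with $\data' = \data \setminus \{x\}$), rather than via an abstract add/remove neighboring relation. Conceptually, the argument is: one-step conditional R\'enyi divergences are controlled by the subsampled Gaussian mechanism, and composition over $T$ steps is carried out by an asymmetric weak triangle inequality that routes expectations through $\nu_{t,\data}$.

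First, I would unfold both distributions as laws of $T$ coupled iterates of noisy SGD started from the same initialization, under Poisson subsampling with probability $\mathbb{P}_{x^*}(1)$ equal to the batch size over the dataset size. Conditional on the iterate $w_t$, both Markov kernels agree except that the kernel for $\nu_{t+1,\data}$ may include $x$ in the sampled minibatch (with probability $\mathbb{P}_{x^*}(1)$), whereas the kernel for $\nu_{t+1,\data'}$ never does. Hence the one-step conditional R\'enyi divergence is exactly that of a subsampled Gaussian mechanism with sensitivity $\|g(x,w_t)\|_2$, and the standard moment calculation reproduces the $\ln f_{t,\alpha}(\|g(x^*,w)\|_2)$ expression as a binomial-weighted sum of Gaussian shift factors of the form $\exp\!\bigl(g^2(k^2 - k)/(2\sigma^2)\bigr)$.

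Next, I would compose these one-step bounds across $t = 1, \ldots, T$ using the asymmetric weak triangle inequality for R\'enyi divergences. Because the one-step divergence depends on the random iterate $w_t$, composition necessarily routes the expectation through one of the two processes. Following \citet{thudi2024gradients}, I would use the variant that (i) integrates against $\nu_{t,\data}$, so that the bound can be estimated from the training trajectory on $\data$, and (ii) inflates the R\'enyi order at each step via the map $\alpha \mapsto o_p(\alpha) = \tfrac{p}{p-1}\alpha - \tfrac{1}{p}$, giving $o_p^t(\alpha)$ after $t$ compositions, while accumulating the prefactor $C_{t,\alpha} = \tfrac{1}{\alpha-1}(p-1)^{t+1}/p^{t+1}$. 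Summing across $t$ yields precisely the expression for $P(x,\alpha)$ in \cref{def:priv_loss}, with the free parameter fixed as $p = 3T$.

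The main obstacle will be the careful bookkeeping in the asymmetric composition: each application of the weak triangle inequality trades off a larger R\'enyi order on the ``local'' term against a smaller prefactor on the ``accumulated'' remainder, and one must verify by induction that the scheme closes with the stated factors $C_{t,\alpha}$ and that each logarithmic subsampled-Gaussian bound is evaluated at the correct inflated order $o_p^t(\alpha)$. Once this inductive scaffolding is set up, the theorem reduces to the $p = 3T$ specialization of \citet[Fact 3.4]{thudi2024gradients}, adapted to neighbors $\data, \data'$ differing by removal of $x$, yielding $D_\alpha(\nu_{T,\data} \| \nu_{T,\data'}) \leq P(x,\alpha)$.
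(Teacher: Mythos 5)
Your proposal follows essentially the same route as the paper, which simply invokes the per-step subsampled-Gaussian divergence bound and the moment-based composition theorem of \citet{thudi2024gradients}; you are in effect re-deriving the internals of those cited results rather than citing them. The one step you omit is that the learning algorithm is \emph{projected} noisy gradient descent, so the argument must additionally invoke the post-processing inequality to conclude that the divergence after each projection is bounded by the divergence before it.
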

\begin{proof}
    This follows by applying the per-instance moment-based composition theorem \citep[Thm.~3.3,][]{thudi2024gradients} with the per-step divergence bound for single data points \citep[Thm.~3.2,][]{thudi2024gradients}, using the post-processing inequality to conclude that the divergence after projections is bounded by that before projections.
\end{proof}

As a consequence of \cref{thm:per_inst} applied to \cref{thm:unl_dyn}, we have unlearning individual data points by noisy gradient descent depends logarithmically on the \privacyscore. The following corollary is a direct substitution of \cref{thm:per_inst} into \cref{thm:unl_dyn}.
\begin{corollary}
\label{cor:main}
    Under the assumptions of \cref{thm:unl_dyn} and \cref{thm:per_inst}, for all $\data$ and $\data' = \data \setminus \{x\}$ for $x \in \data$, for all $\alpha > 1$, there exist 
    constants $A_{\alpha},B_{\alpha},C_{\alpha}>0$
    such that, for all $\delta > \varepsilon_{2\alpha-1}$, running noisy gradient descent for  
    $$
    k \geq A_{\alpha} \ln \bigg( \frac{B_{\alpha}P(x,4\alpha) + C_{\alpha}\varepsilon_{4\alpha-1}}{\delta - \varepsilon_{2\alpha-1}}\bigg)
    $$
    steps 
    $(\alpha,\delta)$-unlearns $\{x\}$ from $\learningalg(\data)$.
\end{corollary}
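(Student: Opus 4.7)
The plan is to chain \cref{thm:per_inst} into \cref{thm:unl_dyn} and then solve the resulting scalar inequality for $k$. Since the forget set is the singleton $\{x\}$ (so $\data' = \data \setminus \{x\}$), \cref{thm:per_inst} gives $D_\alpha(\nu_{T,\data}\|\nu_{T,\data'}) \le P(x,\alpha)$ for every $\alpha>1$; in particular, $P(x,4\alpha)$ is a valid choice for the quantity $\varepsilon'_{4\alpha}$ appearing in the hypothesis of \cref{thm:unl_dyn}. The other terms, $\varepsilon_{4\alpha-1}$ and $\varepsilon_{2\alpha-1}$, measure the distance to stationarity on $\data'$; these are determined by the learning dynamics and are left symbolic throughout.

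Substituting into the bound on $\epsilon^*$ from \cref{thm:unl_dyn} shows that $\unlearnalg_k$ $(\alpha,\epsilon^*)$-R\'enyi unlearns $\{x\}$ with
$$\epsilon^* = \left( \tfrac{2\alpha - 1/2}{2\alpha-2}\, P(x,4\alpha) + \tfrac{2\alpha - 1}{2\alpha-2}\, \varepsilon_{4\alpha-1}\right) \exp\left(-\tfrac{Ck}{2\alpha}\right) + \varepsilon_{2\alpha-1}.$$
To upgrade this to an $(\alpha,\delta)$-guarantee I would impose $\epsilon^* \le \delta$. Under the hypothesis $\delta > \varepsilon_{2\alpha-1}$, the denominator $\delta - \varepsilon_{2\alpha-1}$ is strictly positive, so rearranging and taking logarithms isolates $k$:
$$k \ge \frac{2\alpha}{C}\, \ln\left( \frac{\tfrac{2\alpha-1/2}{2\alpha-2}\, P(x,4\alpha) + \tfrac{2\alpha-1}{2\alpha-2}\, \varepsilon_{4\alpha-1}}{\delta - \varepsilon_{2\alpha-1}} \right).$$
Reading off coefficients yields the claim with $A_\alpha = 2\alpha/C$, $B_\alpha = (2\alpha-1/2)/(2\alpha-2)$, and $C_\alpha = (2\alpha-1)/(2\alpha-2)$, each strictly positive for $\alpha>1$.

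Honestly, there is no real obstacle: the corollary is an algebraic repackaging of the two cited results, and the only thing that requires care is the bookkeeping on R\'enyi orders, namely that \cref{thm:per_inst} must be invoked at order $4\alpha$ to match the $\varepsilon'_{4\alpha}$ slot in \cref{thm:unl_dyn}, while the unavoidable stationarity terms enter at orders $4\alpha-1$ and $2\alpha-1$. The latter is why the hypothesis $\delta > \varepsilon_{2\alpha-1}$ is both necessary and sufficient for the logarithm to be well-defined: no tool from this section gives a per-instance improvement of the convergence-to-stationarity terms, so they must be absorbed into the floor on $\delta$.
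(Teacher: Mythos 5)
Your proposal is correct and matches the paper's intended argument exactly: the paper states that \cref{cor:main} is ``a direct substitution of \cref{thm:per_inst} into \cref{thm:unl_dyn},'' and your derivation carries out precisely that substitution, sets $\epsilon^* \le \delta$, and solves for $k$, yielding the same constants $A_\alpha = 2\alpha/C$, $B_\alpha = (2\alpha-1/2)/(2\alpha-2)$, $C_\alpha = (2\alpha-1)/(2\alpha-2)$. Your remark that \cref{thm:per_inst} must be invoked at order $4\alpha$ to fill the $\varepsilon'_{4\alpha}$ slot is exactly the right bookkeeping.
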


\subsection{Limitations of Existing Group Unlearning Analyses}

The above analysis focuses on forgetting one data point. How about bounding the work to unlearn multiple data points simultaneously?
The group unlearning bound of \citep[][Cor.~3.4]{chien2024langevin} requires that the order of $\alpha$ grows with each data point to unlearn. This is problematic, as the R\'enyi divergence between, e.g., Gaussians, grows linearly with $\alpha$~\citep[Prop.~7,][]{mironov2017renyi}. In contrast, \citep[Thms.~3.3 and 3.6,][]{thudi2024gradients} imply that the R\'enyi divergence $D_{\alpha}(\nu_{T,\data}\|\nu_{T,\data\setminus \data_{F}})$, and hence steps to unlearn, does not necessarily grow with $\data_{F}$; instead, \citeauthor{thudi2024gradients} bound the divergence by comparing the distribution of gradients under $\data$ and under $\data \setminus \data_{F}$. Implementing their group privacy accounting (described in \cref{app:group_priv_analysis}), however, we found evidence the bounds were likely too loose, as they did not differentiate forget sets that we empirically knew to require different numbers of steps to unlearn (see \cref{fig:group_priv_comp} in appendix). In contrast, we found the rankings provided by \privacyscores meaningfully differentiate the number of steps needed to unlearn, as we will describe in \cref{sec:experiments}.

We conclude that the current state of analysis for group unlearning is not tight enough to capture the behavior we observe in practice. The problem of obtaining a tight analysis remains an open problem.

\section{Methodology}
\label{sec:method}



Our empirical methodology is structured around two key objectives driven by the theoretical role of per-instance \privacyscores in unlearning. 
First, we aim to validate that \privacyscores effectively predict the relative difficulty of unlearning data points. 
Second, we seek to understand the factors contributing to this difficulty by investigating the relationship between \privacyscores, the loss landscape, and existing metrics of data difficulty.  In the following sections, we present our experimental design to address these questions.



\subsection{Empirically Validating Unlearning Difficulty}
\label{subsec:empval}


\paragraph{Unlearning Algorithms}
 Our investigation primarily focuses on unlearning via fine-tuning on the retain set. 
 We also run unlearning experiments with L1-Sparse \cite{liu2024model}, a regularized version of fine-tuning that is widely recognized as one of the most effective unlearning methods. \citet{hayes2024inexact} has demonstrated that L1-Sparse outperforms a number of other methods in defending against a basic Membership Inference Attack (MIA), as well as other attacks of varying strengths.

\paragraph{Per-instance \Privacyscore}

We compute the terms in the \privacyscore $P(x,\alpha)$ stated in \cref{def:priv_loss} by taking a Monte-Carlo estimate from a single training run with checkpoints $w_0,w_{s_1},w_{s_2},\cdots, w_{s_N}$, i.e.,
$$ \hat{P}_{s_i}(x,\alpha) = C_{s_i,\alpha} \ln f_{s_i,\alpha}( \| g(x^*,w_{s_i}) \|_2),$$
where $C_{s_i,\alpha}, f_{s_i,\alpha}, g$ are defined in \cref{def:priv_loss}.
%

We then approximate the area under the per-step privacy curve (i.e., sum over $t = 0,1,\cdots , s_N$) by using the right hand rule: keeping $\hat{P}_{s_i}$ constant between the checkpointing intervals $(s_{i-1},s_{i}]$. This gives our approximate \privacyscore:
$$\textstyle \hat{P}(x,\alpha) = \sum_{i=1}^{N} \hat{P}_{s_i}(x) (s_{i} - s_{i-1}) .$$
%


Throughout the paper we take $N = 35$ and have the checkpoints $s_i$ evenly spaced throughout training. In the case of SGD, without explicit noise, we approximate these scores by assuming a negligible amount of noise is present. In particular we take $\sigma \leq 0.1$, 
which matches the trends we observe with SGLD. We also found the \privacyscores rankings for SGD to be stable to the implicit $\sigma$ used for \privacyscores computation (see Appendix~\ref{sec:app_experimental_results}). We note that past work has looked into quantifying the noise inherent in training, due to hardware and software nondeterminism~\citep{jia2021proof,zhuang2022randomness}. It is an open problem to exploit software and hardware nondeterminism to offer a theoretical justification to our approach here.


\paragraph{Forget Set Difficulty}

We rank examples by \privacyscores and form 5 forget sets of varying difficulty by taking evenly spaced sequences of 1000 data points. The size of the forget set was chosen to be in a similar range as prior work (e.g., \citep{zhao2024makes}).
Our theory suggests that higher \privacyscores should lead to longer, thus more difficult, unlearning.
We thus call a forget set more difficult than another forget set if its average \privacyscore is higher.
In our analysis, each forget set is represented by the average \privacyscore of all samples within that set. Further methodological details are provided in \cref{sec:app_experimental_details}.

\paragraph{Unlearning Evaluation} 




We evaluate unlearning efficacy using three metrics: (1) accuracy, measured separately on the retain set (RA), test set (utility), and on the forget set (FA). We report $\mathrm{UA} = 1-\mathrm{FA}$, indicating how ``accurate'' unlearning is, as done in \citep{fan2023salun};
(2) membership inference attack (MIA): we train a logistic regression classifier to identify training samples and report the fraction of forget set samples incorrectly classified as test samples, thus indicating successful forgetting;\anvith{Can we cite a paper for this MIA attack?}
and (3) Gaussian Unlearning Score (GUS)~\citep{pawelczyk2024machine}, which 
employs Gaussian input poisoning attacks to reveal if the unlearned model still encodes noise patterns associated with the poisoned forget set data. See \cref{app:evals} for more details.

These metrics are monitored during unlearning and compared with the oracle model to evaluate the effectiveness of the unlearning methods. Recall that the oracle model is obtained by training from scratch using the retain set only. We choose these metrics due to their common use in machine unlearning research~\citep{fan2025challenging, zhao2024makes, jia2023model, deeb2024unlearning, fan2024salun,pawelczyk2024machine} and their computational simplicity, enabling us to compute them at every step during unlearning. Additional details on their computation and associated parameter choices can be found in \cref{app:evals}.

\paragraph{Datasets and models} Our experiments are performed on the SVHN \citep{netzer2011reading} and CIFAR-10 \citep{alex2009learning} datasets, with a ResNet-18 architecture \citep{he2016deep}. \cref{sec:app_experimental_results} presents results for ViT-small~\cite{dosovitskiy2020image}.
\cref{sec:app_experimental_details} describes other details for reproducibility. \fTBD{gkd:Mention vit if results come in}

Each experimental configuration (oracle training, or training on all data and unlearning) is run 10 times, and the average performance across these runs is reported.


\begin{figure*}[t]
    \centering
    \includegraphics[width=0.33\linewidth]{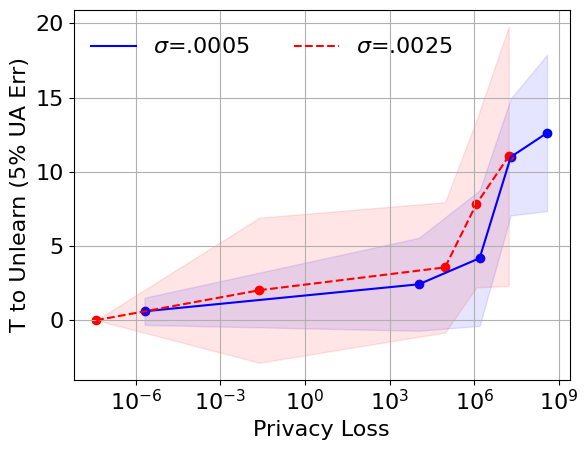}
        \includegraphics[width=0.33\linewidth]{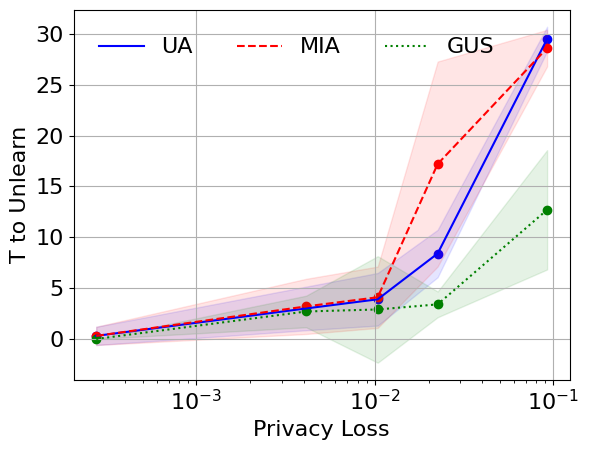}
        \includegraphics[width=0.33\linewidth]{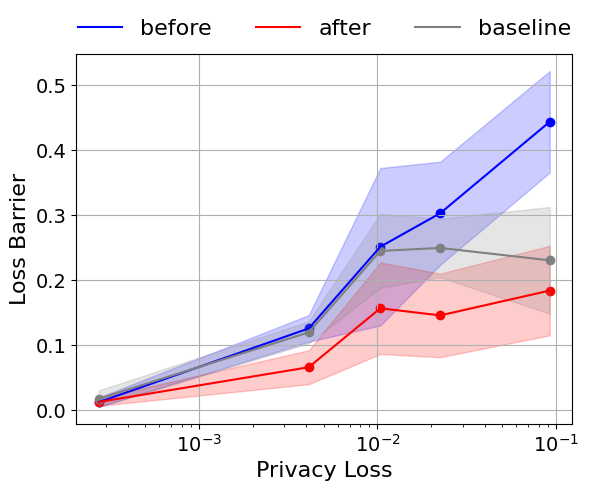}
    \vspace*{-2em}
    \caption{ CIFAR-10 dataset results. \textbf{Left:} SGLD unlearning with varying levels of noise ($\sigma$). Forget set difficulty (x-axis), as measured by the \privacyscore, against time to unlearn (y-axis). Time to unlearn is measured in terms of epochs needed to get within 5\% of the unlearning metric (e.g., UA or MIA) measured on the oracle model. 
    \textbf{Middle:} SGD unlearning. Time to unlearn measured across three evaluation metrics. 
    \textbf{Right:} Error barrier between the oracle and the unlearned model before and after unlearning for forget sets with different \privacyscores. Baseline corresponds to the loss barrier between two oracles. 
    }
    \label{fig:mainplot}
\end{figure*}


\subsection{Loss Landscape Analysis}

While per-instance privacy losses provide a quantitative measure of unlearning difficulty based on training dynamics, they do not directly reveal the underlying geometric properties of the loss landscape that contribute to this difficulty. 
To gain a deeper understanding of why certain data points are harder to unlearn, we complement our privacy loss analysis with an investigation of the loss landscape.
Specifically, we employ the concepts of (linear) loss barriers \citep{frankle2020linear,nagarajan2019uniform}. 
Loss barriers characterize the ``flatness'' or ``curvature'' of the loss surface between different model parameter configurations.


The loss barrier $\text{err}(w, w'; \data)$  is the deviation in cross entropy $\mathcal{L}$ on the data $\data$ along the linear path in weight space connecting $w$ to $w'$. Let $\bar\alpha = 1-\alpha$. Then  $\text{err}(w, w'; S)$ is
\begin{equation}
 \max_{\alpha \in [0,1]} \left[ \mathcal{L}( \alpha w + \bar\alpha w'; S ) - \alpha \mathcal{L}(w; S) - \bar\alpha \mathcal{L}(w'; S) \right].
\end{equation}

To account for the permutation invariance of neural networks, we compute these loss barriers modulo permutation, as detailed in \citep{entezari2021role,sharma2024simultaneous}.

Loss barriers provide insight into the geometric properties of  high-dimensional loss surfaces. 
In our experiments, we compute loss barriers between oracle models (trained without the forget set) and models before and after unlearning forget sets with various average \privacyscores.  
This allows us to examine if forget sets with higher \privacyscores (higher predicted unlearning difficulty) exhibit distinct loss landscape characteristics, particularly in terms of loss barriers.

\subsection{Comparison to Alternative Data Difficulty Metrics}

While the per-instance \privacyscores provide valuable insights into the unlearning process, their computation requires storing gradients throughout training, leading to considerable computational overhead. 
Therefore, we explore alternative metrics that could serve as proxies for these scores, offering a more efficient way to estimate forget set difficulty.\anvith{I think we can emphasize earlier that we will also test if we find harder forget sets (seems like a strong result).} 
Furthermore, we investigate the relationship between these proxies and fine-tuning-based unlearning difficulty, shedding light on their underlying mechanisms.

We evaluate five proxies:
(1) the gradient norm of individual data points at a single mid-training iteration;
(2) the gradient norm at the end of training; 
(3) the average gradient norm across all training iterations; 
(4) C-Proxy used by \citet{zhao2024makes} to approximate memorization scores from \citep{feldman2018privacy}; adapted from \citet{jiang2020characterizing}. This proxy computes prediction confidence: the entry in the softmax vector corresponding to the ground truth class, averaged throughout the training trajectory;
and (5) a single-trajectory variant of the EL2N score \citep{paul2021deep}.
Normally, EL2N score is computed by averaging error signals over multiple training trajectories at a fixed time point. However, for computational efficiency and direct comparability with our gradient-based proxies, we compute a single-trajectory EL2N score at a mid-training checkpoint. See \cref{app:difficulty} for a discussion on connections among the scores.

\section{Experimental Results} \label{sec:experiments}


We empirically validate the predictive capabilities of per-instance \privacyscores in forecasting unlearning difficulty. Our primary finding is that \privacyscores accurately rank datapoints according to the number of steps needed to unlearn. This is tested for two settings of interest: (1) SGLD, which aligns directly with the assumptions of our theoretical upper bounds; and (2) SGD. For SGD, while lacking explicit noise, we adapt \privacyscores by assuming a small, implicit noise component.

To probe the geometric origins of unlearning difficulty as captured by \privacyscores, we analyze the loss landscape.
Our investigation reveals a consistent trend: data points with higher \privacyscores exhibit larger loss barriers on a linear path (in the weight space) to the oracle model. \nicolas{can we comment here or later in the paper what this implies for future work?} 

We also assess the practical utility of \privacyscores by comparing them to computationally efficient proxy metrics.  
Our results demonstrate a strong correlation between \privacyscores and existing proxy metrics, including those employed in previous studies to estimate forget set difficulty (C-Proxy in \citep{zhao2024makes}).
However, a key advantage emerges: \privacyscores provide superior precision in identifying truly difficult-to-unlearn data points within the context of fine-tuning-based unlearning. 
The forget sets ranked as most difficult by \privacyscores consistently take longer to unlearn than those identified by these established proxies.

\subsection{Time to Unlearn Depends on Per-Instance Privacy}

We find that across a variety of unlearning metrics,
\privacyscores accurately separate data points by the number of steps needed to unlearn in both SGLD and SGD. \cref{sec:l1} shows the same trends for L1-sparse unlearning.

\paragraph{SGLD Training}
To evaluate the predictions outlined in \cref{sec:method_analysis}, 
we examine the relationship between the number of steps needed to unlearn using noisy fine-tuning (SGLD), and the average \privacyscore within the forget set.
\cref{fig:mainplot} (left) shows that as forget set \privacyscores increase, a higher number of fine-tuning steps is needed to reach a $5\%$ error margin relative to the oracle, validating our prediction.



\paragraph{SGD Training}

While SGLD offers theoretical advantages for privacy analysis and bounding R\'enyi unlearning, SGD remains the dominant training paradigm in practice. 
Therefore, we investigate whether our theoretical framework, developed in the context of Langevin dynamics, can also predict unlearning time for models trained with SGD.




As shown in \cref{fig:mainplot} (middle), we observe a similar trend as in the SGLD experiments across all evaluation metrics: unlearning more difficult forget sets, characterized by higher average \privacyscores, requires more fine-tuning steps. 

This suggests that even without explicit noise injection during training, the concept of per-instance \privacyscores derived from our theoretical analysis can provide valuable insights into the unlearning process for SGD-trained models. That is, SGD seems to be well-approximated by low noise SGLD.
These results are replicated across additional datasets, architectures, and forget set sizes, with full details and qualitative examples in \cref{sec:app_experimental_results}.

\subsection{More Difficult Forget Sets, Larger Loss Barriers} 


We characterize difficult to unlearn data points by their loss landscape, in particular, loss barriers (see Section~\ref{sec:method}).



\Cref{fig:mainplot} (right)
depicts the loss barrier between oracle models and unlearned models, while varying the difficulty of the forget set, as measured by the average \privacyscore.

We observe two key takeaways from this analysis: \textbf{(1)} Comparing our results to the baseline loss barriers between independently trained oracle models, we find that fine-tuning achieves comparable levels after unlearning. This can be seen as a measure of unlearning efficacy 
based on loss barriers: if the (distribution of the) loss barrier is different from that of the baseline between oracles, unlearning was unsuccessful; \textbf{(2)} Higher \privacyscore values correspond to larger initial barriers, providing a geometric interpretation for \privacyscores; data points that require more steps to unlearn have to overcome larger loss barriers to the oracle.




Overall, our findings provide additional evidence for the utility of \privacyscores in predicting unlearning difficulty,
and point to the effectiveness of loss barriers as both a diagnostic and an evaluation tool for machine unlearning. 



\subsection{Existing Metrics Correlate with \PrivacyScores}

We now investigate how \privacyscores compare to other data difficulty metrics, as described in Section~\ref{sec:method}. Our results in \cref{fig:correlationplots} reveal that all these proxies exhibit high correlation with the actual \privacyscore.
As expected, the best proxy is the average gradient norm throughout training, but it is also the most expensive proxy as it requires computing gradient norms for each data point at every iteration, versus once.
Other proxies (with the exception of C-Proxy) only require a gradient/error computation at a single checkpoint, yet still provide a reasonable approximation for categorizing examples into broad difficulty groups.

These findings suggest that, 
in scenarios where computational resources are limited, utilizing these proxies can offer a practical alternative for estimating forget set difficulty and predicting unlearning time.
We refer to recent work by \citet{kwok2024dataset} for an in-depth comparison of different data difficulty metrics that may serve as good proxies.

Recall that C-Proxy has been used in prior work by \citet{zhao2024makes} to identify difficult to unlearn forget sets for certain unlearning algorithms. \cref{fig:correlationplots} shows that \privacyscores are highly correlated with this heuristic. 
Our work thus offers theoretical grounding to previously proposed heuristics for identifying difficult to unlearn forget sets.

Finally, note that among metrics relying on averaging over training, our method is more efficient, requiring only 35 evenly spaced checkpoints (approximately 20\% of training time), compared to C-Proxy and average gradient norm, which store values at every epoch (150 in total). As shown in \cref{{sec:privacy_unlearning}}, it is also more effective at identifying hard-to-unlearn samples than all other metrics, regardless of whether they rely on averaging.

\begin{figure}[t]
    \centering
    \includegraphics[width=0.7\linewidth]{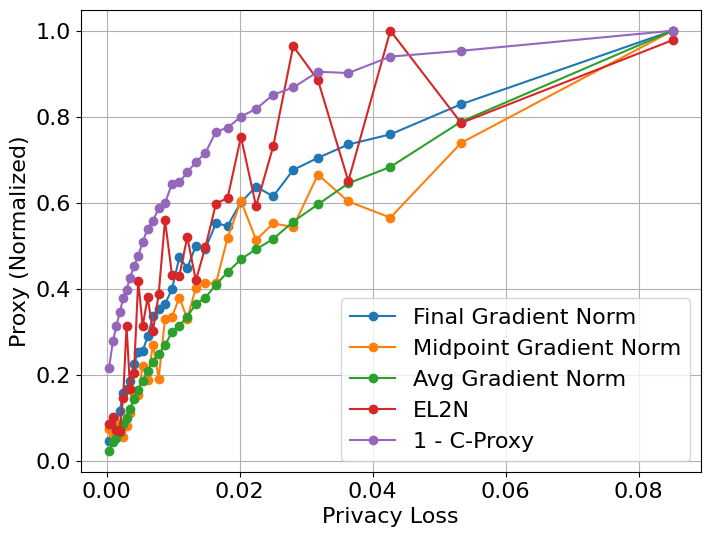}
    \vspace*{-1em}
    \caption{Correlation between \privacyscores (x-axis) and various proxy metrics (y-axis).
    The values of all proxy metrics are normalized to their maximum value for better visual clarity. For improved readability, the data is binned into 30 bins.}
    \label{fig:correlationplots}
\end{figure}

\subsection{Privacy Losses Identify Harder Data}
\label{sec:privacy_unlearning}

In the previous subsection, we provided evidence that existing data difficulty metrics correlate with per-instance \privacyscores. In this section, we provide evidence that, in fact, \privacyscores are able to identify harder data. In \cref{fig:harderdata}, we compare our per-instance \privacyscores to several data difficulty metrics, including C-Proxy (described in \cref{sec:method}), average gradient norm, and EL2N scores.

For forget sets of size $s \in \{600, 1000, 2000\}$, we look at the top $s$ data points as ranked by C-Proxy, average gradient norm, EL2N scores, and \privacyscores. What we see is that, across the board, the data picked out by \privacyscores are harder to unlearn, as measured both by the number of iterations to reach 5\% unlearning accuracy or reach 5\% excess membership inference attack error. By identifying more difficult examples, \privacyscores open up new empirical approaches to evaluating unlearning performance. \fTBD{DR: Why are the top 2000 data easier to unlearn in terms of ACC? Is it because the easier data dominate the evaluation? Seems like a serious problem with ACC :). Would we expect monotonicity for ACC? and for MIA? Does ACC get easier as you add easier data? I would think so? Does MIA get no easier? If we're choosing harder data, why is our MIA so similar for top 600? It seems that it is the next 400 where we really kick butt.}

\begin{figure}[t]
    \centering
    \includegraphics[width=1.0\linewidth]{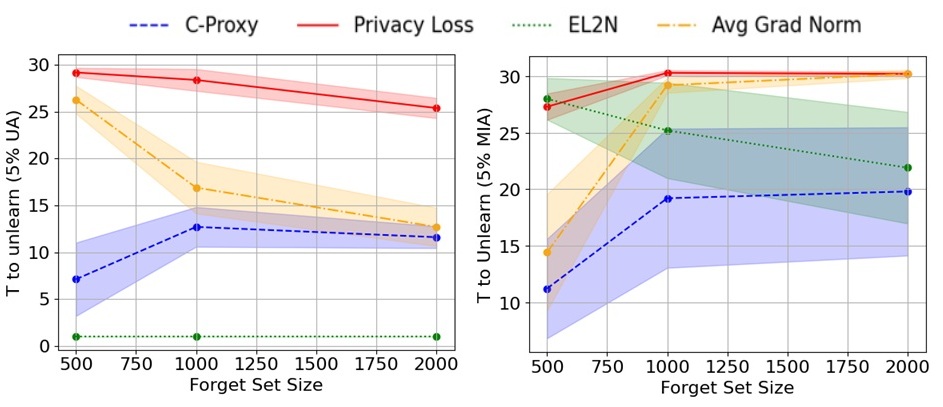}
    \caption{Comparison of the time needed to unlearn (y-axis) the most difficult forget sets as identified by \privacyscores (ours), C-Proxy, average gradient norm and EL2N, across different forget set sizes (x-axis).}
    \label{fig:harderdata}
\end{figure}

\section{Conclusion} \label{sec:conclusion}

In this work, we have introduced a principled approach to quantifying machine unlearning difficulty at the level of individual data points in terms of per-instance privacy losses, which bound the R\'enyi divergence between training with and without a datapoint. 
Our theoretical analysis provides a foundation for understanding how unlearning scales with the properties of specific data points, particularly in the context of Langevin dynamics.

We have shown that per-instance \privacyscores, estimated from training statistics, reliably predict unlearning difficulty in fine-tuning-based unlearning algorithms, across different architectures and datasets, even in the absence of explicit noise injection during training.  
Our empirical results demonstrate that these \privacyscores offer a precise and actionable measure of unlearning difficulty. 
Our work also offers a theoretical grounding for previous work suggesting that certain forget sets are harder to unlearn, with \privacyscores capturing similar aspects of difficulty for fine-tuning-based unlearning as previously proposed heuristics \citep{zhao2024scalability,baluta2024unlearning,zhao2024makes}. Moreover, we show \privacyscores identify harder forget sets than previous methods.

Our findings have broader implications for unlearning methodology, suggesting that per-instance divergence analysis can guide the development of new, more efficient unlearning algorithms tailored to specific data characteristics.
Extending our theoretical framework to other unlearning methods beyond fine-tuning and exploring the use of \privacyscores in designing adaptive unlearning strategies is a promising direction for future work. 

\section*{Impact Statement}
This paper presents work whose goal is to advance the field of 
machine unlearning, which is specifically oriented to improve the trustworthiness of machine learning, by supporting requests to remove the influence of training data. There are many potential positive societal consequences 
of our work, none which we feel must be specifically highlighted here.

\section*{Acknowledgments}

We thank Ioannis Mitliagkas and Ilia Shumailov for feedback on a draft of this work. Anvith Thudi, Ashmita Bhattacharyya, and Nicolas Papernot would like to acknowledge the sponsors of the CleverHans lab, who support our research with financial and in-kind contributions: CIFAR through the Canada CIFAR AI Chair, NSERC through the Discovery Grant, the Ontario Early Researcher Award, the Schmidt Sciences foundation through the AI2050 Early Career Fellow program, and the Sloan Foundation. Resources used in preparing this research were provided, in part, by the Province of Ontario, the Government of Canada through CIFAR, and companies sponsoring the Vector Institute. Anvith Thudi is also supported by a Vanier Fellowship from NSERC. Daniel M. Roy is supported by the funding through NSERC Discovery Grant and Canada CIFAR AI Chair at the Vector Institute. We also thank Mila – Quebec AI Institute and Google DeepMind for providing the computational resources that supported this work.

\bibliography{ref}
\bibliographystyle{icml2025}

\newpage
\appendix
\onecolumn

\section{Proofs}

\subsection{\cref{thm:unl_dyn}}
\label{app:proof_unl_dyn}

\begin{proof}
By \citep[Theorem~3.2,][]{chien2024langevin}, for all $\alpha > 1$,
there exists a constant $C>0$
such that
\begin{align}\label{contractionbnd}
    D_{2\alpha}(\unlearningdist\| \nu_{\data'}) \leq e^{-\frac{Ck}{2\alpha} } D_{2\alpha}(\nu_{T,\data} \|  \nu_{\data'}),
\end{align}
where $D_{2\alpha}(\nu_{T,\data}\| v_{\data'})$ is the initial $2\alpha$-R\'enyi divergence to stationarity on $\data'$, after training on $\data$. By the weak triangle inequality~\citep[Proposition~11,][]{mironov2017renyi}, this is bounded by
\begin{align}
     &\frac{2\alpha - 1/2}{2\alpha-1}D_{4\alpha}(\nu_{T,\data}\|\nu_{T,\data'}) + D_{4\alpha-1}(\nu_{T,\data'}\| \nu_{\data'} )  \leq \frac{2\alpha - 1/2}{2\alpha-1} \varepsilon'_{4\alpha} + \varepsilon_{4\alpha-1}, \label{weaktri}
\end{align}
where the second inequality follows from our hypotheses.


Finally, applying the weak triangle inequality once more, 
$D_{\alpha}(\unlearningdist \| \nu_{T,\data'})$ is bounded by
\begin{align*}
         & \frac{\alpha - 1/2}{\alpha-1} D_{2\alpha}(\unlearningdist\| \nu_{\data'}) + D_{2\alpha-1 }(\nu_{\data'} \| \nu_{T,\data'}),
\end{align*}
which yields the claimed bound from our hypotheses, after substituting \cref{contractionbnd,weaktri}.
\end{proof}

\section{Group Privacy Analysis and Methodology}
\label{app:group_priv_analysis}

\begin{figure}
    \centering
    \includegraphics[width=0.6\linewidth]{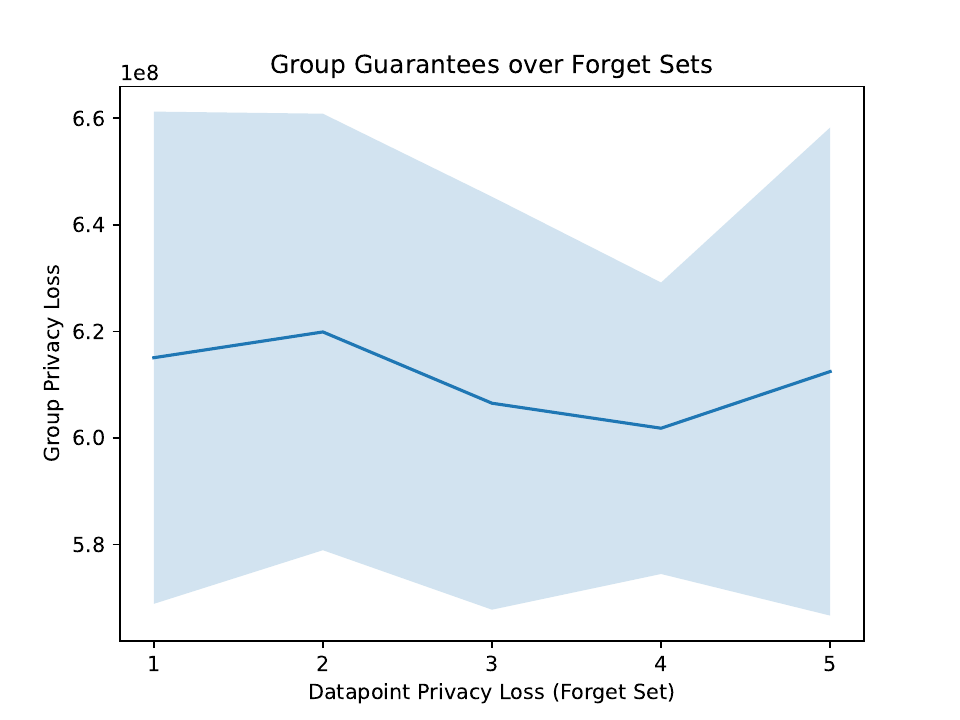}
    \caption{We compared our estimates of the group privacy guarantees (y-axis) across forget sets determined by rankings of \privacyscores (x-axis), and found the group privacy guarantees did not change. This was despite these forget sets leading to consistent differences in the number of steps to unlearn. We report the mean over $20$ estimates of the group privacy values, and one standard deviation. We conclude the theory for group unlearning is currently not sharp enough to capture trends seen in practice.}
    \label{fig:group_priv_comp}
\end{figure}

The following theorem comes from the results of \citet{thudi2024gradients}.
\begin{theorem}
\label{thm:group_priv}
    Suppose we train with noisy gradient descent for $T$ steps. Then for an arbitrary $\data,\data' = \data \setminus \data_{F}$, we have:
    
        $$D_{\alpha}(\nu_{T,\data} \| \nu_{T,\data'}) \leq \sum_{t = 1}^{T} C_{t,\alpha} \ln \mathbb{E}_{w \sim \nu_{t,\data}} G_{t,\alpha}(\data,\data',w).$$

    where $C_{t,\alpha} = \frac{1}{\alpha -1} \frac{(p-1)^{t+1}}{p^{t+1}}$ and

    $$\ln G_{t,\alpha}(\data,\data',w) = p\mathbb{E}_{\data'_B} \ln \mathbb{E}_{\mathbf{{\data_{B}}^{ o_p^{t}(\alpha)}}}\left(e^{\frac{-1}{2\sigma^2}\Delta_{o_p^{t}(\alpha)}(\mathbf{{\data_{B}}^{o_p^{t}( \alpha)}}, \data'_B, w)}\right)$$

    where $\mathbf{\data_B^{\mathbf{ \alpha}}} = \{\data_{B}^1,\cdots, \data_{B}^{\alpha}\}$ is a random sample of $\alpha$ minibatches from $\data$, $\data'_B$ is a single random minibatch, $o_p(\alpha) = \frac{p}{p-1} \alpha - \frac{1}{p}$ and $o_p^{t}$ is $o_p$ composed $t$ times, and $p$ is a free parameter we set to $3T$ following \citep[Fact 3.4,][]{thudi2024gradients}, and letting $U(\data_B,w) = \nabla_w \ell(w,\data_B)$ be the mini-batch gradient:
    
$$\Delta_{\alpha}(\mathbf{{\data_{B}}^{\alpha}}, \data'_B,w) \\ \coloneqq \sum_{i} ||U({\data_B}^i,w)||_2^2 - (\alpha-1) ||U(\data'_B,w)||_2^2 - ||\sum_{i}U({\data_B}^i,w) - (\alpha-1)U(\data'_B,w)||_2^2.$$

\end{theorem}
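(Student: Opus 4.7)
The plan is to adapt the single-point argument behind \cref{thm:per_inst} to the group setting, following the same two-step recipe: (i) derive a per-step Rényi divergence bound comparing one noisy gradient step under $\data$ to one under $\data' = \data \setminus \data_F$, and (ii) aggregate these bounds across $T$ iterations via the moment-based composition theorem of \citet{thudi2024gradients}, then invoke post-processing to absorb the projections. The single-point theorem is exactly this recipe applied with a bound tailored to changing one data point; for groups one replaces that bound with the mini-batch-level bound, which is precisely what $\Delta_\alpha$ encodes.

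For step (i), fix $w \sim \nu_{t,\data}$ and consider a single noisy update. Conditional on the minibatch, the update is Gaussian with variance $\sigma^2$ centered at $U(\data_B,w)$, so for an integer order $\beta$ the inner $\beta$-th moment $\mathbb{E}[(\mu/\nu)^\beta]$ -- where $\mu$ and $\nu$ are the update laws under a batch from $\data$ versus $\data'$ -- is a Gaussian moment-generating function evaluated at a linear combination of mini-batch-mean gradients. Because minibatches are themselves sampled, each side is a Gaussian mixture; the log-MGF identity applied to $\beta$ i.i.d.\ copies of the training batch against one retain batch, together with the Pythagorean-style expansion of the squared norm of a sum of gradient means, produces exactly the expression $\exp(-\Delta_\beta/(2\sigma^2))$ appearing inside $G_{t,\beta}$. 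Moving the outer logarithm past the retain-batch expectation at a controlled multiplicative cost of $p$ -- the mechanism behind \citet{thudi2024gradients}'s shifted-order accounting -- then yields the $p\,\mathbb{E}_{\data_B'}\ln\mathbb{E}_{\mathbf{\data_B^\beta}}(\cdot)$ form of $\ln G_{t,\beta}$.

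For step (ii), the moment-based composition of \citet{thudi2024gradients} telescopes through the order recursion $o_p(\beta) = \frac{p}{p-1}\beta - \frac{1}{p}$ with free parameter $p$ (set to $3T$ as in the main text). At composition step $t$ the effective order grows to $o_p^t(\alpha)$ while the accumulating log-moment is rescaled by the factor $(p-1)/p$; after $T$ steps this yields the coefficient $C_{t,\alpha} = \frac{1}{\alpha-1}\frac{(p-1)^{t+1}}{p^{t+1}}$ in front of an outer sum over $t$. Inserting the per-step bound from (i) at order $o_p^t(\alpha)$ recovers the claimed expression for $D_\alpha(\nu_{T,\data}\|\nu_{T,\data'})$, with the outer expectation over $w \sim \nu_{t,\data}$ arising because the $t$-th step is evaluated at iterates produced by training on $\data$. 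Finally, projecting each iterate onto the constraint set is a deterministic post-processing of the unprojected sample path, so the Rényi divergence between the projected laws is at most that between the unprojected laws \citep[Prop.~1,][]{mironov2017renyi}.

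The main obstacle I anticipate is verifying the exact algebraic form of $\Delta_\alpha$ and its interplay with batch sampling. The Rényi divergence of a mixture of Gaussians is not itself a Gaussian MGF, so passing from the conditional per-step Gaussian calculation to the joint expression $\mathbb{E}_{\data_B'}\ln\mathbb{E}_{\mathbf{\data_B^\beta}}[\exp(-\Delta_\beta/(2\sigma^2))]$ relies on the shifted-order trick, and matching the signs, the $(\beta-1)$ coefficient on $\|U(\data_B',w)\|_2^2$, and the exact count of $\beta$ training batches in $\Delta_\beta$ requires careful bookkeeping. Once those are aligned, everything else is a direct instantiation of the same moment-composition machinery used to prove \cref{thm:per_inst}, and the proof should read as a one-line invocation of \citet{thudi2024gradients}'s Theorems~3.2 and~3.3 in their mini-batch (rather than single-point) form, combined with post-processing.
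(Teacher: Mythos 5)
Your proposal matches the paper's proof, which is exactly the one-line invocation you arrive at: the per-step mini-batch divergence bound of \citet{thudi2024gradients} (their Theorem~3.6, the general-update analogue of the single-point Theorem~3.2) composed via their moment-based composition theorem (Theorem~3.3), with the projections handled by post-processing. The additional derivation you sketch for the per-step Gaussian-mixture bound is not reproduced in the paper, which simply cites it, but your overall route is the same.
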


\begin{proof}
    A direct consequence of applying \citep[Theorem~3.3,][]{thudi2024gradients} with the general update per-step divergence bound of \citep[Theorem~3.6,][]{thudi2024gradients}, and noting the divergence after applying the projections is bounded by the divergence before applying the projections by the post-processing inequality.
\end{proof}

\subsection{Methodology for \privacyscores computation}

To estimate the guarantees we take a Monte-Carlo sample of checkpoints from a single training run at steps $s_0,s_1,\cdots,s_N$, and estimate the $\ln G_{t,\alpha}(\data,\data',w)$ term at each step by sampling a single random mini-batch from $\data'$ and $o_p^{s_i}(\alpha)$ mini-batches from $\data$ to estimate the expectations. We then compute the sum using the right hand rule, analogous to our estimate of \privacyscores described in Section~\ref{sec:method}. 

In \cref{fig:group_priv_comp} we took checkpoints from an SGD training run on CIFAR10 with ResNet18, and used $\sigma = 0.1$, and took $\alpha = 8$ to compute the group privacy scores. We report the mean over $20$ estimates (given the stochasticity in our estimates for the per-step terms $\ln G_{t,\alpha}(\data,\data',w)$) and shaded in one standard deviation.

\begin{figure}[t]
    \centering
    \includegraphics[width=0.8\linewidth]{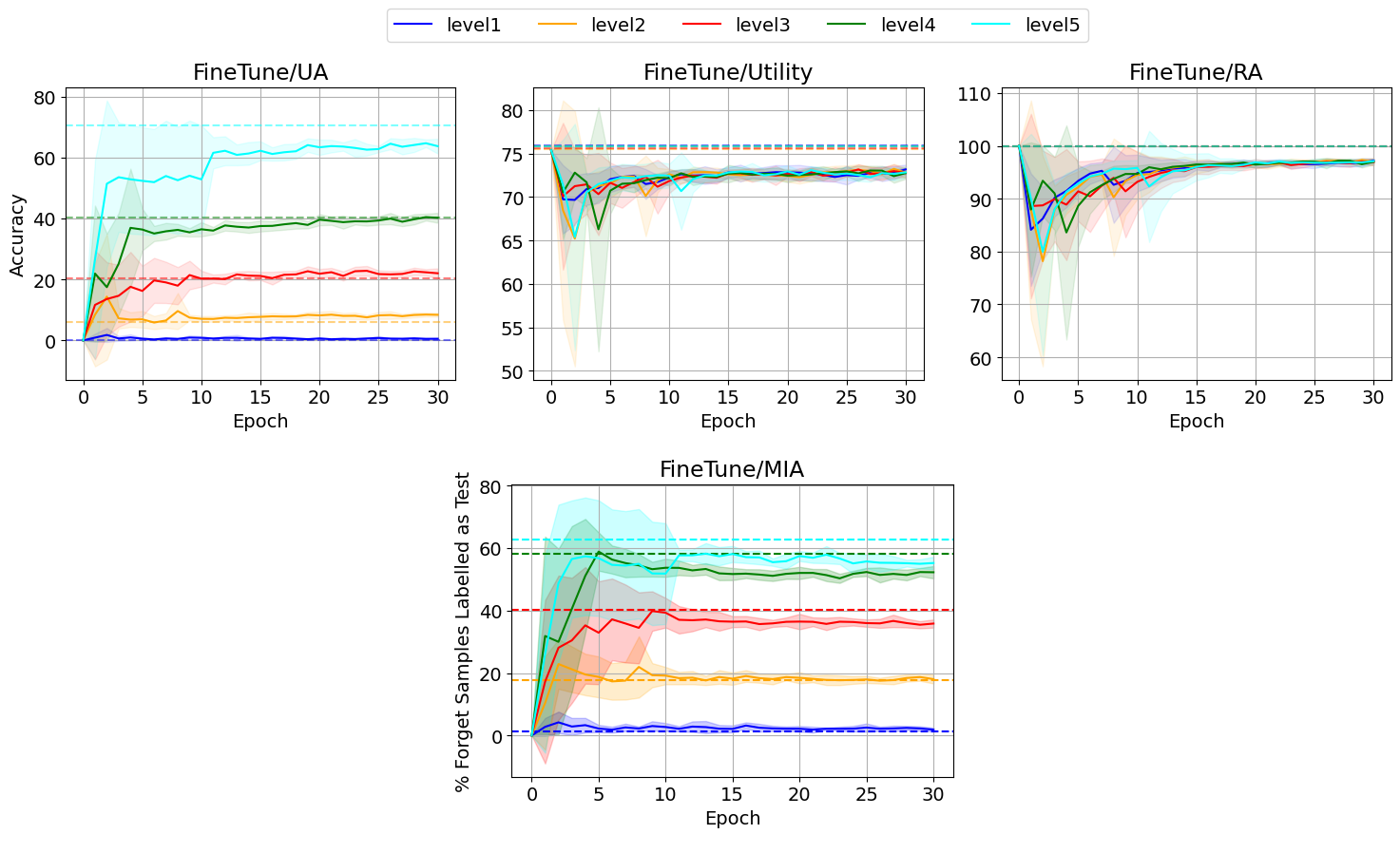}
    \caption{Unlearning results for accuracy metrics (top) and MIA success rate (bottom). The x-axis represents the number of epochs. In each plot, lines of different colors represent forget sets of varying difficulty, while the dashed line indicates the oracle's performance.}
    \label{fig:cifar10}
\end{figure}

\section{Example Difficulty Related Work}
\label{app:difficulty}

\citet{paul2021deep} propose EL2N score to capture how much an example contributes to learning a high accuracy predictor, high score meaning high importance during training to achieve high accuracy. At the same time, the authors find that high scoring examples tend to be difficult to learn, are often memorized at the end of training and are outliers. 
This score has been shown to correlate with a number of other example difficulty and memorization metrics proposed in the literature \citep{kwok2024dataset,paul2021deep}, some of which have been shown to also capture unlearning difficulty for a number of unlearning algorithms \citep{zhao2024makes,baluta2024unlearning,zhao2024scalability}.

\section{Additional Experimental Details} \label{sec:app_experimental_details}

\paragraph{Constructing Difficulty-Based Forget Sets} To create forget sets with varying difficulty levels, the training dataset is partitioned into five subsets based on privacy scores. First, the samples are sorted in ascending order by their scores. Recursive splits are then performed to identify key thresholds: the lower quartile (Q1), the median (Q2), and the upper quartile (Q3). Using these thresholds, five subsets are constructed: (1) the first 1000 samples, (2) intervals centered around Q1 (Q1 ± 500 samples), (3) intervals centered around Q2 (Q2 ± 500 samples), (4) intervals centered around Q3 (Q3 ± 500 samples), and (5) the last 1000 samples. This approach provides a systematic stratification of the dataset, enabling the evaluation of unlearning performance across varying levels of difficulty as determined by privacy scores.

\paragraph{Learning rates and training times for SGD} The original model, which serves as the starting point for all unlearning techniques (not for SGLD), is trained for 150 epochs using an initial learning rate of 0.01, a weight decay of 0.0005, and a learning rate schedule that reduces the learning rate by an order of magnitude at epochs 80 and 120. Each unlearning method is subsequently fine-tuned for 25 epochs.

\paragraph{Additional details for SGLD} At every step we added $N(0,\sigma^2)$ Gaussian noise to the minibatch gradient, where we vary $\sigma$ for ablations. All other hyperparameters were kept the same as SGD. In particular we do not do any additional gradient clipping. \PROBLEM{include noise level, learning rate, regularized or not, clipped gradients or not?}

\paragraph{Additional details for L1-sparse} 
L1-sparse is an unlearning method inspired by the observation that pruning aids unlearning \cite{liu2024model}. Its objective function closely resembles that of fine-tuning but includes an additional $L_1$ regularization term, weighted by a hyperparameter $\alpha$, which encourages sparsity in model parameters to facilitate unlearning.

\begin{figure}[t]
    \centering
    \includegraphics[width=0.45\linewidth]{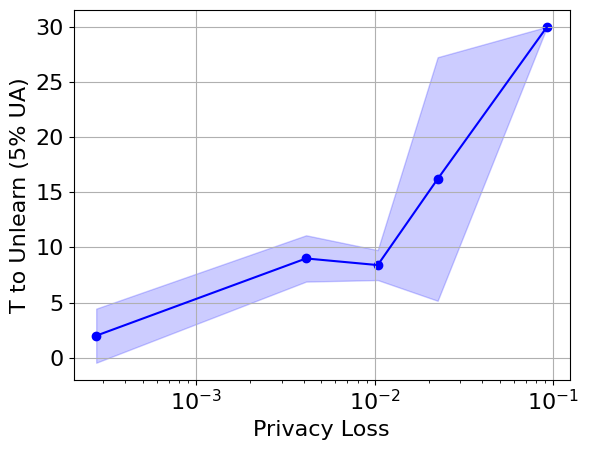}
    \caption{Unlearning time for forget sets with different privacy loss values using the L1-sparse method. Time is measured by the number of steps required for the unlearning method to reach a 5\% margin of error, where error is defined as the difference between the unlearned model's UA and the oracle's UA for the given forget set.}
    \label{fig:l1sparse}
\end{figure}

\paragraph{Hyperparameter tuning} We perform hyperparameter tuning (HPT) for the unlearning methods using the Bayesian optimization method on a random forget set. While fine-tuning involves a single hyperparameter--the learning rate-- L1-Sparse additionally optimizes $\alpha$. To determine the best hyperparameters for each method, we employ Bayesian optimization to find configurations that achieve an optimal balance between privacy and utility. Additionally, to ensure that the selected hyperparameters are also optimized with respect to the number of steps required for unlearning, we identify hyperparameter sets that fall within a 5\% margin of error for this trade-off. Among these, we select the configuration that converges the fastest.

\paragraph{Compute resources}
Experiments were conducted using L40 and RTX8000 GPUs, and AMD EPYC 7452 CPUs.

\section{Evaluation metrics} 
\label{app:evals}

\paragraph{Membership Inference Attack} Membership inference attacks (MIA) aim to determine whether a given sample was part of a model's training data by analyzing differences in the model's responses. In our approach, we train a logistic regression classifier using the model's confidence values on training and test samples as inputs. The attacker then attempts to classify the forget samples, with success measured by the percentage of forget samples labeled as test, indicating effective unlearning.

\PROBLEM{Add more details on the MIA used -  how we trained a predictor.}

\begin{figure}[t]
    \centering
    \includegraphics[width=0.4\linewidth]{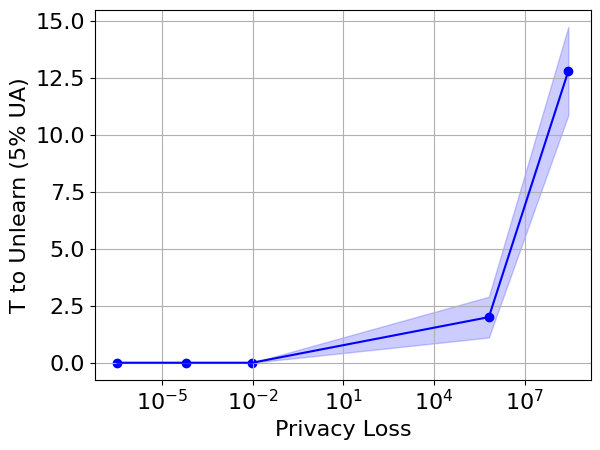}
    \includegraphics[width=0.435\linewidth]{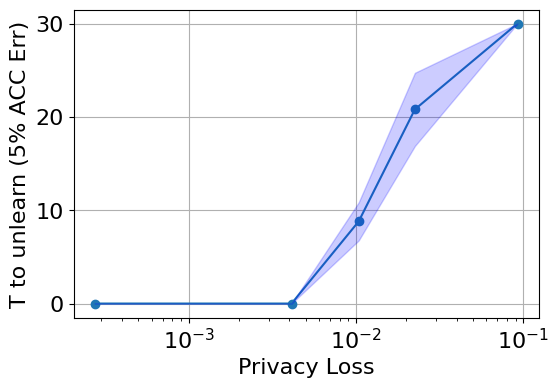}
    \caption{SGD unlearning. Unlearning time vs. privacy score for ResNet-18 on SVHN (left) and ViT-small on CIFAR-10 (right). Unlearning time is measured in steps required to reach a 5\% UA error margin.}
    \label{fig:datasets}
\end{figure}

\paragraph{Gaussian Unlearning Score}
We use the Gaussian Unlearning Score (GUS), introduced by \citet{pawelczyk2024machine}, to quantify the impact of poisoned samples on the model. To compute GUS, each sample in the forget set is perturbed with zero-mean Gaussian noise with a standard deviation of \(\sigma\). GUS is then computed by averaging (over the forget set) the per-example inner product between the gradient of the loss with respect to the clean (non-poisoned) sample and the stored Gaussian noise used for poisoning, normalized by the L2 norm of the gradient. The effectiveness of an unlearning method is then assessed by how well it mitigates the influence of these poisoned samples. Specifically, the change in GUS before and after unlearning serves as a measure of unlearning success.
   
For the CIFAR-10 and ResNet-18 setup, the original work recommends a variance value of $0.32$.  However, in our experiments, we explored different values and found that smaller variances were better suited for our setup. Based on these empirical findings, we set $\sigma ^2 =0.062$.

\section{Additional Experimental Results} \label{sec:app_experimental_results}

\subsection{Unlearning trends during fine-tuning} In this experiment, we apply the fine-tuning method to unlearn a ResNet-18 model trained on the full CIFAR-10 dataset. The forget set varies in difficulty, with five different levels determined by the proxy losses of the samples. The UA, utility and RA values during unlearning is depicted in \cref{fig:cifar10} (top), while the MIA results are depicted in \cref{fig:cifar10} (bottom). For UA and MIA, we see the most difficult forget sets do indeed take longer to unlearn. We see utility and RA are similar across difficulty levels.

\subsection{L1-sparse fine-tuning}
\label{sec:l1}
The unlearning results for the L1-sparse method are presented in \cref{fig:l1sparse}. Similar to our observations for unlearning with SGD or SGLD fine-tuning, unlearning with L1-sparse takes longer to forget samples with higher privacy scores. In fact, the more challenging the forget sets, the longer the unlearning.

\subsection{Additional datasets/architectures}
In addition to ResNet-18 on CIFAR-10, we conduct experiments with additional dataset-architecture pairs. Specifically, we evaluate ResNet-18 on SVHN and ViT-small on CIFAR-10. ViT-small is a Vision Transformer model that applies self-attention mechanisms to sequences of image patches, enabling effective global feature extraction. \Cref{fig:datasets} presents the unlearning results for ResNet-18 on SVHN (left) and ViT-small on CIFAR-10 (right). The results suggest that, consistent with our previous findings, unlearning takes longer for forget sets with a higher average privacy loss.

\paragraph{Rankings across noise levels}

We ran experiments to test sensitivity of the ranking for SGD to the noise values used in estimating privacy losses. For SVHN with ResNet-18 we found: (1) Spearman correlation between rankings at $\sigma = 0.01$ and $\sigma = 0.001$ was $0.70 (p = 0.0)$. (2) Between $\sigma = 0.001$ and $\sigma = 0.0005$ was $0.99 (p = 0.0)$. (3) Between $\sigma = 0.0005$ and $\sigma = 0.0001$ was $0.99 (p = 0.0)$. These results suggest that rankings are largely noise-invariant, as long as some noise is present. Past work has shown evidence of observable noise during training due to software and hardware non-determinism~\citep{jia2021proof}.

\begin{figure}[t]
    \centering
    \includegraphics[width=0.32\linewidth]{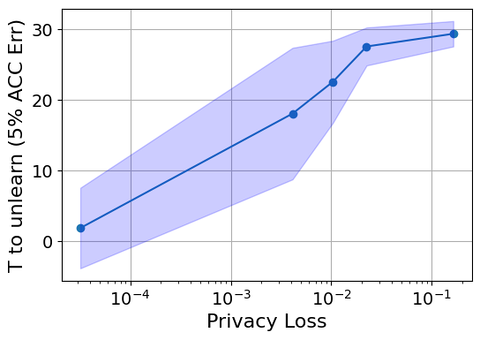}
    \includegraphics[width=0.32\linewidth]{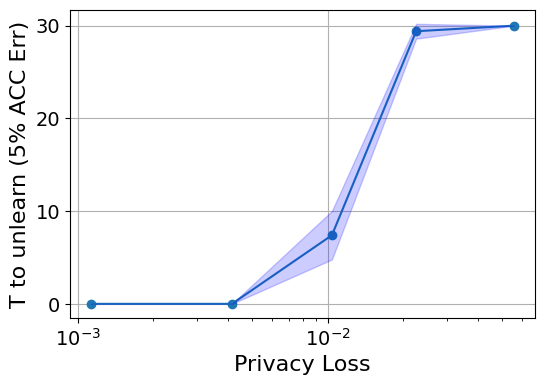}
    \includegraphics[width=0.32\linewidth]{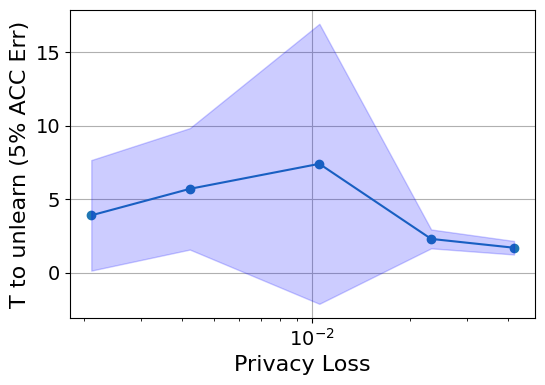}
    \caption{Unlearning time for forget sets of size 100 (left), 5,000 (middle), and 10,000 (right), evaluated on CIFAR-10. Unlearning time is reported as the number of training steps required to achieve a UA error within 5\%.}
    \label{fig:varying_size}
\end{figure}

\begin{figure}[h]
    \centering

    \begin{minipage}[b]{0.22\linewidth}
        \centering
        \includegraphics[width=\linewidth]{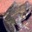}
    \end{minipage}
    \hfill
    \begin{minipage}[b]{0.22\linewidth}
        \centering
        \includegraphics[width=\linewidth]{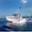}
    \end{minipage}
    \hfill
    \begin{minipage}[b]{0.22\linewidth}
        \centering
        \includegraphics[width=\linewidth]{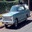}
    \end{minipage}
    \hfill
    \begin{minipage}[b]{0.22\linewidth}
        \centering
        \includegraphics[width=\linewidth]{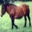}
    \end{minipage}

    \vspace{1em}  

    \begin{minipage}[b]{0.22\linewidth}
        \centering
        \includegraphics[width=\linewidth]{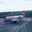}
    \end{minipage}
    \hfill
    \begin{minipage}[b]{0.22\linewidth}
        \centering
        \includegraphics[width=\linewidth]{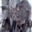}
    \end{minipage}
    \hfill
    \begin{minipage}[b]{0.22\linewidth}
        \centering
        \includegraphics[width=\linewidth]{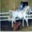}
    \end{minipage}
    \hfill
    \begin{minipage}[b]{0.22\linewidth}
        \centering
        \includegraphics[width=\linewidth]{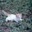}
    \end{minipage}

    \caption{Qualitative examples of forgetting difficulty on CIFAR-10. We show 4 examples each from the easy-to-forget (top row) and hard-to-forget (bottom row) subsets, identified from a forget set of 1,000 samples.}
    \label{fig:qualitative_examples}
\end{figure}

\subsection{Varying Forget Set Sizes}
We conducted additional experiments varying the size of the forget set (100, 5,000, and 10,000 samples). The results are presented in \cref{fig:varying_size}. Our findings indicate that privacy loss consistently distinguishes between easy and hard-to-forget subsets, even when the forget set is small. Interestingly, for very large forget sets (e.g., 10,000 samples), the separation begins to diminish. This observation is intuitive, as the variance in average privacy loss decreases with increasing subset size.

\subsection{Qualitative Results}
Qualitative examples of easy and hard to forget CIFAR-10 samples are provided in \cref{fig:qualitative_examples}.

\end{document}